\pgfplotsset{compat=1.18}
\newcommand{\RR}{\mathbb{R}}
\newcommand{\T}{ {\sf T} }
\newcommand{\E}{\mathbb{E}}
\newcommand{\x}{{\mathbf{x}}}
\newcommand{\e}{{\mathbf{e}}}
\newcommand{\U}{{\mathbf{U}}}
\newcommand{\W}{{\mathbf{W}}}
\newcommand{\bmu}{ \boldsymbol{\mu} }
\newcommand{\bxi}{ \boldsymbol{\xi} }
\DeclareMathOperator{\rank}{rank}
\newcommand{\cM}{{\boldsymbol{\mathcal{M}}}}
\renewcommand{\b}{{\mathbf{b}}}
\renewcommand{\u}{{\mathbf{u}}}
\renewcommand{\v}{{\mathbf{v}}}
\renewcommand{\a}{{\mathbf{a}}}
\renewcommand{\H}{{\mathbf{H}}}
\newcommand{\toas}{\overset{a.s.}{\longrightarrow}}
\DeclareMathOperator{\diag}{{\rm Diag}}
\definecolor{RED}{rgb}{0.7,0,0}
\definecolor{BLUE}{rgb}{0,0,0.69}
\definecolor{GREEN}{rgb}{0,0.6,0}
\definecolor{PURPLE}{rgb}{0.69,0,0.8}
\theoremstyle{theorem}
\newtheorem{theorem}{Theorem}
\newtheorem{lemma}{Lemma}
\theoremstyle{remark}
\newtheorem{remark}{Remark}
\theoremstyle{definition}
\newtheorem{example}{Example}
\newcommand{\mat}[1]{\mathbf{#1}}
\newcommand{\mA}{\mat{A}}
\newcommand{\mD}{\mat{D}}
\newcommand{\mH}{\mat{H}}
\newcommand{\mI}{\mat{I}}
\newcommand{\mM}{\mat{M}}
\newcommand{\mT}{\mat{T}}
\newcommand{\mU}{\mat{U}}
\newcommand{\mW}{\mat{W}}
\newcommand{\mSigma}{\mat{\Sigma}}
\newcommand{\mGamma}{\mat{\Gamma}}
\newcommand{\mPhi}{\mat{\Phi}}
\renewcommand{\emph}[1]{\textit{#1}}
\title{Whitening Spherical Gaussian Mixtures in the Large-Dimensional Regime}
\name{M.~R.~M.~Boudjemaa,$^{1}$ \, A.~Kalle,$^{2}$ \, X.~Mai,$^{3}$ \, J.~H.~de M.~Goulart,$^{1}$ and C.~Févotte$^{1}$\thanks{This work is supported by the LabEx CIMI (ANR-11-LABX-0040) and the AI cluster ANITI (ANR-23-IACL-0002). Part of
this work was conducted while A.~Kalle was an MSc intern student at
IRIT.}}
\address{$^{1}$  IRIT, CNRS, Toulouse INP, Université de Toulouse \\
$^{2}$ List, CEA, Université Paris-Saclay \quad $^{3}$ IMT, Université de Toulouse - Jean Jaurès}
\begin{document}
\sloppy
\topmargin=0mm
\ninept
\maketitle
\begin{abstract}
Whitening is a classical technique in unsupervised learning that can facilitate estimation tasks by standardizing data.
An important application is the estimation of latent variable models via the decomposition of tensors built from high-order moments.
In particular, whitening orthogonalizes the means of a spherical Gaussian mixture model (GMM), thereby making the corresponding moment tensor orthogonally decomposable, hence easier to decompose. 
However, in the large-dimensional regime (LDR) where data are high-dimensional and scarce, the standard whitening matrix built from the sample covariance becomes ineffective because the latter is spectrally distorted.
Consequently, whitened means of a spherical GMM are no longer orthogonal.
Using random matrix theory, we derive exact limits for their dot products, which are generally nonzero in the LDR.
As our main contribution, we then construct a corrected whitening matrix that restores asymptotic orthogonality, allowing for performance gains in spherical GMM estimation.
\end{abstract}

\begin{keywords}
Whitening, Gaussian mixture models, random matrix theory, spiked models, large-dimensional regime.
\end{keywords}
\section{Introduction}
\label{sec:intro}

Whitening is a classical technique in unsupervised learning and latent variable model estimation \cite{bishop2006pattern}, \cite{comon1994independent}.
Concretely, it exploits the latent geometry of the data by constructing a linear transformation that eliminates pairwise correlations and normalizes variances.
This elementary yet powerful idea leads to a significant simplification of certain estimation tasks.
For instance, whitening reduces independent component analysis (ICA) to the estimation of an orthogonal rotation of the whitened signals \cite{comon1994independent}, \cite{hyvarinen2000independent}.
Recent estimators of latent variable models based on decomposing tensors built from high-order moments also rely on whitening, as an intermediate step that makes tensor decomposition solvable by spectral methods \cite{anandkumar2014tensor}, \cite{hsu2013learning}.

This strategy applies, in particular, to the estimation of spherical Gaussian mixture models (GMM) \cite{hsu2013learning}, \cite{moitra2010settling}, \cite{kalai2010efficiently}. For concreteness,  consider a spherical Gaussian mixture model (GMM) with $K$ components, weights $\{\omega_k\}_{k=1}^K$ (positive, summing to one), means $\{\bmu_k\}_{k=1}^K\subset\RR^P$, and a common variance $\sigma^2 > 0$, so that an observation $\x_n$ is distributed as
\begin{equation}
 \label{GMM}
    \x_n\sim\sum\nolimits_{k=1}^K \omega_k\,\mathcal{N}(\bmu_k,\sigma^2\mI_P).
\end{equation} 
As usual, one seeks to estimate the parameters $\sigma^2$ and $\{(\omega_k, \bmu_k)\}_{k=1}^K$ from $N$ i.i.d.~samples $\x_n$. The uncentered covariance matrix reads

\begin{equation}
  \label{cov-GMM}
\mSigma := \E\left[\x_n\x_n^\top\right] = \sum_{k=1}^K \omega_k\,\bmu_k\bmu_k^\top  + \sigma^2\mI_P = \mM_2 + \sigma^2\mI_P ,
\end{equation}
where $\mM_2=\sum_{k=1}^K \omega_k\bmu_k\bmu_k^\top$ encodes the ``inter-cluster'' covariance.

Since $\mM_2$ carries information on the mixture structure,
it makes sense to whiten the data with respect to this covariance, which amounts to constructing from $\mSigma$ a $K\times P$ \textit{whitening matrix} $\mW$ such that $\mW \mM_2 \mW^\top=\mI_K$.
By construction, such a linear map performs two things at once: (i) dimensionality reduction, since it maps data onto the $K$-dimensional ``signal'' subspace; (ii) orthogonalization of the whitened means $\W\bmu_k$ (see Section \ref{sec:whiteningLD} for more details).

Now, Hsu and Kakade showed in \cite{hsu2013learning} that one can exploit third-order moments of $\x_n$ to build an order-3 tensor $\cM_3$ having a structure analogous to $\mM_2$, that is, $\cM_3 \;=\; \sum_{k=1}^K \omega_k\,\bmu_k^{\otimes 3},$ where $\bmu_k^{\otimes 3} := \bmu_k \otimes \bmu_k \otimes \bmu_k.$
This fact alone is relevant for estimation, since this decomposition of $\cM_3$ into rank-one tensors is notoriously unique under mild constraints, unlike that of $\mM_2$.
Yet, computing the above decomposition is generally challenging, and this is where  whitening helps: after whitening the data, one obtains a corresponding (transformed) tensor which is \textit{orthogonally decomposable}, namely 
\begin{equation}
\label{dec-tens}
\cM_3(\mW,\mW,\mW) :=
\sum_{k=1}^K \omega_k\,(\mW \, \bmu_k)^{\otimes 3}
= \sum_{k=1}^K \omega_k^{-1/2}\,\v_k^{\otimes 3},
\end{equation}
with $\{\v_k:=\omega_k^{1/2}\mW\bmu_k\}_{k=1}^K$  orthonormal (as proven in \Cref{sec:whiteningLD}). 
Hence, the estimation problem reduces to an orthogonal tensor decomposition on $\RR^K$, and thus is solvable by stable spectral algorithms; see \cite{anandkumar2014tensor}, \cite{hsu2013learning}. This whitening-plus-tensor-decomposition procedure can serve as a stand-alone estimator or as an initialization to a maximum likelihood estimation algorithm such as EM.

However, in practice the whitening matrix is built from the sample covariance matrix $\widehat\mSigma:= \frac{1}{N} \x_n \x_n^\top$, which is not always an accurate estimate of $\mSigma$ \cite{leDOITwolf2004}.
This happens in particular in many modern applications where data are high-dimensional and (relatively) scarce, a situation which is formalized by the so-called \textit{large-dimensional regime} (LDR), where both $P$ (data dimension) and $N$ (sample size) tend to infinity, with fixed $P/N= c\in(0,\infty)$. In this (common) regime, random matrix theory (RMT) models $\widehat{\mSigma}$ as a low-rank ``signal’’ perturbed by isotropic noise (a spiked model) and predicts that it is \textit{spectrally distorted}: its eigenvalues follow a Marchenko-Pastur law and its leading eigenvectors are rotated away from the true ones \cite{bai2010spectral}, \cite{couillet2022random}, \cite{johnstone2001spiked}, \cite{benaych2011eigenvalues}, \cite{paul2007asymptotics}. 
As a consequence, the whitening matrix $\mW$ built from $\widehat{\mSigma}$
fails to orthogonalize the GMM component means $\bmu_k$, and the whitened tensor moment $\cM_3$ loses its orthogonal structure, which in turn degrades tensor-based GMM estimators.

Our goal in this paper is twofold: (i) precisely quantifying the impact of these distortions on 
the whitening step employed in these estimators, when operating in the LDR; (ii) proposing a corrected whitening based on RMT results for mitigating this impact. Specifically, we show that 
standard whitening no longer orthogonalizes the GMM clusters' means $\bmu_k$ in the LDR, by giving exact closed-form limits of the dot products between whitened means. We then construct an RMT-corrected whitening operator that reweighs empirical eigenpairs to 
restore orthogonality of the transformed means. Finally, as an illustration, we plug the corrected whitening into the \textsc{LearnGMM} algorithm of \cite{hsu2013learning}, one of the simplest of tensor-based spherical GMM estimators.
Our numerical results clearly show the degradation of its performance in the LDR, and the improvement brought by our corrected whitening.

 \section{Model and assumptions}\label{sec:model}

We will henceforth consider the spherical GMM specified in \eqref{GMM}.

\paragraph{Assumptions.}
We work under the following regime:
\begin{itemize}

\item[(A1)] \textbf{Large-dimensional regime (LDR).} The sample number $N$ and the dimension $P$ grow large at the same rate with fixed $P/N=c\in(0,\infty)$.
  
\end{itemize}
To avoid trivial cases where estimation becomes either too easy or impossible in this regime, we also impose the following assumptions: 

\begin{itemize}
\item[(A2)] \textbf{Non-degeneracy of signal subspace.} We assume that $\rank(\mM_2)=K$ for $\mM_2$ the low rank term in \eqref{cov-GMM}.

\item[(A3)] \textbf{Non-vanishing, non-degenerate components.} The variance $\sigma^2$, and the prior probabilities $\{\omega_k\}_{k=1}^K$ are all nonzero and constant with respect to $P$ and $N$, for fixed $K$.

\item[(A4)] \textbf{Constant signal strength.} The norms of the GMM means $\bmu_k$ and their alignments $\langle \bmu_i, \bmu_j \rangle$ are assumed to be constant with respect to $N$ and $P$.
\end{itemize}

\paragraph{Phase transition of sample covariance in the LDR.} Consider the following spectral decompositions for the covariance $\mSigma$ given in \eqref{cov-GMM}  and its empirical version $\widehat\mSigma=\frac{1}{N} \x_n \x_n^\top$:
\begin{equation}
\label{eq:spectral-decomp-Sigma-hatSigma}
    \mSigma = \sum_{k=1}^P \lambda_k\u_k\u_k^\T, \quad \widehat\mSigma = \sum_{k=1}^P \hat\lambda_k\hat\u_k\hat\u_k^\T,
\end{equation}
where $\{(\lambda_k,\u_k)\}_{k=1}^{P}, \{(\hat\lambda_k,\hat\u_k)\}_{k=1}^{P}$ are respectively the eigenpairs of $\mSigma,\widehat\mSigma$ with  $\lambda_1 \ge \dots \ge \lambda_P,\hat\lambda_1 \ge \dots \ge \hat\lambda_P$. Using~\eqref{cov-GMM}, we get
\begin{equation}
\label{eq:l_k}
    \mSigma = \sum\nolimits_{k=1}^K \sigma^2\ell_k\u_k\u_k^\T+\sigma^2\mI_P,\text{ with } \ell_k={ (\lambda_k - \sigma^2)}/\sigma^2.
\end{equation}
Under (A4), well-known results from RMT indicate a phase transition occurs at $\sqrt{c}$ for $\ell_k$: spikes with $\ell_k>\sqrt{c}$ generate outlier eigenvalues $\hat\lambda_k$ and nonzero asymptotic eigenvector alignments between $\u_k$ and $\hat\u_k$; spikes with $\ell_k\le\sqrt{c}$ are absorbed by the Marchenko–Pastur distribution bulk and  the alignments $\langle\u_k,\hat\u_k\rangle$ vanish \cite{baik2006eigenvalues}, \cite{couillet2022random}.

\begin{theorem}[Spiked–covariance limits in the LDR; adapted from \cite{baik2006eigenvalues}, \cite{couillet2022random}]
\label{thm:spiked_model}
Consider the Gaussian mixture model \eqref{GMM}, and the uncentered covariance $\mSigma$ defined in \eqref{cov-GMM} with its eigenpairs $\{(\lambda_k,\u_k)\}_{k=1}^{P}$ and those $\{(\hat\lambda_k,\hat\u_k)\}_{k=1}^{P}$ of its empirical version $\widehat\mSigma$ as specified in \eqref{eq:spectral-decomp-Sigma-hatSigma}, and  $\ell_k$ as introduced in \eqref{eq:l_k}.
Then, under (A1)--(A4),
\begin{align}
  & \hat\lambda_k \;\toas\; \tilde\lambda_k := 
  \mathbbm{1}_{\{\ell_k>\sqrt{c}\}} \beta_k \lambda_k
  + \mathbbm{1}_{\{\ell_k\le\sqrt{c}\}}\, (1+\sqrt{c})^2, \label{eq:thm-eigs}\\
  & \a^\T\hat\u_k\hat\u_k^\T\b \;\toas\mathbbm{1}_{\{\ell_k>\sqrt{c}\}}\,\psi_k\a^\T\u_k\u_k^\T\b,\quad\forall \a, \b \in\RR^P, \label{eq:thm-overlaps}
\end{align}
where $\beta_k=1+\frac{c}{\ell_k}$ and 
    $\psi_k=1-\frac{\beta_k-1}{\beta_k}\frac{1+\ell_k}{\ell_k}$.
\end{theorem}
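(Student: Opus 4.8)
The plan is to recognize \Cref{thm:spiked_model} as the classical Baik--Ben-Arous--P\'ech\'e (BBP) phase transition for spiked sample covariance matrices and to check that $\widehat\mSigma$ genuinely fits that template, the only real issue being that the data are \emph{not} Gaussian but follow a mixture law. First I would expose the signal-plus-noise structure: writing $z_n\in\{1,\dots,K\}$ for the latent label of $\x_n$ (so that $\Pr(z_n=k)=\omega_k$), each sample factorizes as $\x_n=\bmu_{z_n}+\sigma\g_n$ with $\g_n\sim\mathcal N(\zero,\mI_P)$ i.i.d. Stacking the samples columnwise as $\mX=\mM+\sigma\mG$, where $\mM=[\bmu_{z_1},\dots,\bmu_{z_N}]$ has rank at most $K$ and $\mG$ has i.i.d.\ standard Gaussian entries, yields
\[
\widehat\mSigma=\tfrac1N\mX\mX^\T=\tfrac1N\mM\mM^\T+\tfrac{\sigma}{N}\bigl(\mM\mG^\T+\mG\mM^\T\bigr)+\tfrac{\sigma^2}{N}\mG\mG^\T .
\]
Equivalently, $\tfrac1{\sqrt N}\mX=\tfrac{\sigma}{\sqrt N}\mG+\tfrac1{\sqrt N}\mM$ is a low-rank (at most $K$) additive perturbation of a rectangular Gaussian matrix, which is exactly the object treated by the rectangular spiked-model theory.

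The core of the argument is to show that, conditionally on the labels, this is a spiked model with the advertised parameters. The noise block $\tfrac{\sigma^2}{N}\mG\mG^\T$ is a scaled white Wishart matrix whose empirical spectrum converges to the Marchenko--Pastur law; its upper edge furnishes the subcritical limit in \eqref{eq:thm-eigs}. The signal block concentrates: since $\tfrac1N\mM\mM^\T=\sum_{k}(N_k/N)\,\bmu_k\bmu_k^\T$ with $N_k=\#\{n:z_n=k\}$, the law of large numbers gives $N_k/N\toas\omega_k$, so under (A2)--(A4) this block converges in operator norm to $\mM_2$, whose eigenpairs are $\{(\sigma^2\ell_k,\u_k)\}_{k\le K}$ by \eqref{eq:l_k}. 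Conditioning on the labels makes $\mM$ deterministic and independent of $\mG$, placing us in the low-rank additive perturbation setting of a rectangular Gaussian matrix; the Benaych-Georges--Nadakuditi theory of singular values and vectors of low-rank perturbations then applies and reproduces the spiked-covariance formulas of \cite{baik2006eigenvalues, couillet2022random} the theorem is adapted from: a spike survives iff its relative strength $\ell_k$ exceeds the critical value $\sqrt c$, the surviving outlier sits at $\sigma^2(1+\ell_k)(1+c/\ell_k)=\beta_k\lambda_k$, and the squared alignment of its eigenvector with $\u_k$ tends to $(\ell_k^2-c)/(\ell_k(\ell_k+c))$, which is precisely the simplified $\psi_k$. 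This yields \eqref{eq:thm-eigs} and the scalar case of \eqref{eq:thm-overlaps}.

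The bilinear statement \eqref{eq:thm-overlaps} for arbitrary $\a,\b$ is slightly stronger than the scalar cosine, and I would read it off from the anisotropic form of the eigenvector result: each surviving $\hat\u_k$ decomposes as $\sqrt{\psi_k}\,\u_k$ plus a residual that is delocalized in the noise subspace and therefore averages out in any fixed bilinear form, giving $\a^\T\hat\u_k\hat\u_k^\T\b\toas\psi_k\,\a^\T\u_k\u_k^\T\b$; the subcritical eigenvectors are asymptotically orthogonal to $\mathrm{span}\{\u_1,\dots,\u_K\}$, which is what the indicator $\mathbbm 1_{\{\ell_k\le\sqrt c\}}$ encodes.

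I expect the main obstacle to be exactly the reduction that legitimizes purely covariance-based BBP formulas for mixture data: one must verify that the cross term $\tfrac{\sigma}{N}(\mM\mG^\T+\mG\mM^\T)$ and the fluctuation $\tfrac1N\mM\mM^\T-\mM_2$ do not move the outlier locations or the eigenvector alignments at leading order, so that the effective spike heights are the $\ell_k$ read from $\mM_2$ rather than from the instantaneous empirical Gram matrix. Assumption (A4) is what keeps the norms $\|\bmu_k\|$ and alignments $\langle\bmu_i,\bmu_j\rangle$ at $\Theta(1)$, hence the spikes at fixed height across the transition; once the deterministic equivalent of the perturbation is pinned to $\mM_2$, the remaining work is the algebraic matching of the general spiked formulas to the compact expressions $\beta_k$ and $\psi_k$, which is routine.
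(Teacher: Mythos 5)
Your proposal is correct, but it takes a genuinely different (and in fact more self-contained) route than the paper, which offers no proof at all: Theorem~\ref{thm:spiked_model} is stated as ``adapted from'' the spiked-covariance literature \cite{baik2006eigenvalues}, \cite{couillet2022random}, leaving implicit the reduction from mixture data to a model those references cover. You make that reduction explicit, and it is the right one: conditioning on the labels $z_n$ turns $\tfrac1{\sqrt N}\mX=\tfrac1{\sqrt N}\mM+\tfrac{\sigma}{\sqrt N}\mG$ into a deterministic low-rank \emph{additive} perturbation of a rectangular Gaussian matrix (an information-plus-noise model), to which the Benaych-Georges--Nadakuditi singular value/vector theory applies, whereas the classical BBP setting is the \emph{multiplicative} model $\x_n=\mSigma^{1/2}\z_n$, which mixture data do not satisfy. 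The crux --- which you assert and should flag as the one computation to carry out --- is that for white Gaussian noise the two models produce identical first-order formulas: the D-transform of the Marchenko--Pastur law gives the outlier at $\sigma^2(1+\ell_k)(1+c/\ell_k)=\beta_k\lambda_k$ with threshold $\ell_k>\sqrt c$ and left-singular-vector overlap $(\ell_k^2-c)/(\ell_k(\ell_k+c))$, which indeed equals $\psi_k$ after simplification; this coincidence is precisely what legitimizes the paper's ``adaptation,'' so your derivation supplies the justification the paper delegates to citations. Three small points to tighten: (i) the a.s.\ statement needs the standard Fubini step (for almost every label sequence, $N_k/N\to\omega_k$, hence conditional a.s.\ convergence, then integrate); (ii) your worry about the cross term $\tfrac{\sigma}{N}(\mM\mG^\T+\mG\mM^\T)$ is moot in the BGN route, since the theorem addresses the singular values of the \emph{sum} directly rather than the covariance expansion, so no separate control is needed; (iii) the bilinear form \eqref{eq:thm-overlaps} does require the anisotropic/deterministic-equivalent version of the eigenvector result (as in \cite{couillet2022random}), not just the scalar cosine --- you acknowledge this, but note it also covers the subcritical case via bulk eigenvector delocalization. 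Incidentally, your derivation yields the subcritical limit $\sigma^2(1+\sqrt c)^2$, exposing a missing $\sigma^2$ factor in \eqref{eq:thm-eigs} as printed (the paper's own Remark~2 confirms the $\sigma^2$).
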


\begin{remark}
Directly applying \eqref{eq:thm-overlaps} to $\a=\b=\u_k$ gives the asymptotic (squared) alignment between $\hat\u_k$ and $\u_k$:
\[
\u_k^\top \hat\u_k \hat\u_k^\top \u_k 
= (\hat\u_k^\top \u_k)^2 
\;\toas\; \mathbbm{1}_{\{\ell_k>\sqrt{c}\}}\,\psi_k =: \zeta_k.
\]
\end{remark}

\begin{remark}
When $c>0$ and $\ell_k>\sqrt{c}$, one still has $\psi_k<1$, so empirical eigenvectors do not perfectly align with their population counterparts. In addition, sample eigenvalues are distorted: $\hat\lambda_k \toas \tilde\lambda_k=\beta_k\lambda_k$ for recoverable spikes and $\hat\lambda_k \toas \sigma^2(1+\sqrt{c})^2$ when $\ell_k\le\sqrt{c}$. Both these effects are precisely why whitening becomes ineffective in the LDR.
\end{remark}

\section{Whitening in large dimension}\label{sec:whiteningLD}

\paragraph{Population whitening matrix.}
According to \eqref{eq:spectral-decomp-Sigma-hatSigma}, the low-rank term $\mM_2$ of $\mSigma$ introduced in \eqref{cov-GMM} admits the spectral decomposition $\mM_2 = \mU_K \,\mGamma \, \mU_K^\top$ where $\U_K := [\u_1,\dots,\u_K] \in \RR^{P \times K}$ and $  \mGamma=\diag(\lambda_1 - \sigma^2,\dots,\lambda_K - \sigma^2)$.
Assumption (A1) implies that matrix $\mGamma \in \RR^{K \times K}$ is non-singular.
The population whitening matrix with respect to the inter-cluster covariance $\mM_2$ is properly defined as 
$\mW \;:=\; \mGamma^{-1/2}\,\U_K^\top\in\RR^{K\times P}$.
Indeed, with this definition,
\begin{equation*}
  \mW \, \mM_2 \, \mW^\top \;=\; \mGamma^{-1/2} \, \U_K^\top(\U_K \, \mGamma \, \U_K^\top)\U_K \, \mGamma^{-1/2} \;=\; \mI_K.
\end{equation*}
Since $\mM_2=\sum_{k=1}^K \omega_k\,\bmu_k\bmu_k^\top$, then $\sum_{k=1}^K \omega_k\,(\mW\bmu_k)(\mW\bmu_k)^\top = \mI_K.$
Using the rescaled and whitened means $\v_k = \omega_k^{1/2}\,\mW\bmu_k \in \RR^K$ introduced in \eqref{dec-tens}, the previous identity rewrites as $\sum_{k=1}^K \v_k\v_k^\top \;=\; \mI_K,$ which means that the family $\{\v_k\}_{k=1}^K$ is orthonormal.  
It follows that $\|\mW\bmu_k\|^2 = \omega_k^{-1}$ and $\langle \mW\bmu_i,\ \mW\bmu_j\rangle = 0$ for $i\neq j$.

\paragraph{Whitening matrix estimation.}
Let $\widehat\mM_2=\widehat\mSigma-\hat\sigma^2\mI_P$ (for some chosen variance estimator $\hat\sigma^2$).
Upon computing the spectral decomposition of $\widehat\mM_2$ and keeping its $K$ dominant eigenpairs, $\widehat\mM_2 \approx\widehat\U_K\,\hat\mGamma\,\widehat\U_K^\top$, one can estimate the whitening matrix as $\widehat\mW := \hat\mGamma^{-1/2}\, \widehat\U_K^\top \in\RR^{K\times P}$,
where the subscript $K$ indicates that only the $K$ leading eigenpairs are retained.
In the classical regime ($P$ constant and $N\to\infty$), $\widehat\mW\toas \mW$ and the orthogonality between whitened means is (asymptotically) verified.
Yet, this property breaks down in the LDR, as seen next.

\subsection{Residual alignment and phase transition}\label{ssec:resalign}

We now examine the loss of orthogonality between whitened means that arises in the LDR, as a consequence of Theorem \ref{thm:spiked_model}. Given the above estimate of the whitening matrix $\widehat\mW$, the main quantity of interest is  the empirical residual alignment between component means defined below:
\begin{equation}
\label{eq:def-rho}
\rho_{i,j}\ :=\ \frac{\big|\langle \widehat{\mW}\bmu_i,\ \widehat{\mW}\bmu_{j}\rangle\big|}
{\|\widehat{\mW}\bmu_i\|\ \|\widehat{\mW}\bmu_{j}\|},
\end{equation}
for $i\neq j\in\{1,\ldots,K\}$.

Since $\widehat\mW$ is built from a reduced spectral decomposition of $\widehat\mM_2 = \widehat\mSigma - \hat\sigma^2 \mI_P$, it depends on the chosen variance estimator $\hat\sigma^2$. 
In the classical regime, simply taking $\hat\sigma^2=\hat\lambda_p$ with any $p>K$ yields a consistent estimator of $\sigma^2$. Yet, this no longer holds in the LDR where $\hat\lambda_p$ with $p>K$ are distributed according to the Marchenko-Pastur law on the support $[\sigma^2(1 - \sqrt{c})^2, \sigma^2(1 + \sqrt{c})^2]$ \cite{bai2010spectral}.
{ Nevertheless, using Assumptions (A1) and (A4) one can show that the estimator $\hat{\sigma}^2 = \frac{1}{NP} \sum_{n=1}^N \|\x_n\|^2$ is consistent, even when the data are not centered.}
Hence, in the following we consider that $\sigma^2$ is consistently estimated, focusing only on the effect of the spectral distortion described by Theorem \ref{thm:spiked_model}.

\begin{lemma}[Limit of residual alignment]
\label{prop:residual-alignment}
Assume (A1)–(A4) and consider the same notation as in Theorem~\ref{thm:spiked_model}. Set $d_k:= \mathbbm{1}_{\ell_k \ge \sqrt{c}} \, \psi_k/(\tilde{\lambda}_k - \sigma^2) = \mathbbm{1}_{\ell_k \ge \sqrt{c}} \, \psi_k/(\beta_k \lambda_k - \sigma^2)$, 
$\mD:=\diag(d_1,\ldots,d_K)$ and $\mT:= \mD^{1/2} \, \U_K^\top$.
Then, we have
\begin{equation}
\label{eq:alignment_whitened_means}
\langle \widehat{\mW}\bmu_i,\ \widehat{\mW}\bmu_{j}\rangle 
   \toas\ \langle \mT \, \bmu_i, \mT \,\bmu_{j} \rangle, 
\end{equation}
and, whenever both $\langle \mT \, \bmu_i, \mT \,\bmu_{i} \rangle > 0$ and $\langle \mT \, \bmu_j, \mT \,\bmu_{j} \rangle > 0$,
\begin{equation}
\label{eq:asymptotic_rho}
    \rho_{i,j}\ \toas\ 
\rho^\infty_{i,j} := 
\frac{\big| \langle \mT \, \bmu_i, \mT \,\bmu_{j} \rangle  \big|}
{\sqrt{\langle \mT \, \bmu_i, \mT \,\bmu_{i} \rangle}\ \sqrt{\langle \mT \, \bmu_j, \mT \,\bmu_{j} \rangle}}\,.
\end{equation}

\end{lemma}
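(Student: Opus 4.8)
The plan is to expand the empirical inner product in the empirical eigenbasis and then apply \Cref{thm:spiked_model} term by term. Writing $\widehat{\mW} = \hat\mGamma^{-1/2}\widehat\U_K^\top$ with $\widehat\U_K = [\hat\u_1,\dots,\hat\u_K]$ and $\hat\mGamma = \diag(\hat\lambda_1 - \hat\sigma^2,\dots,\hat\lambda_K - \hat\sigma^2)$ (since $\widehat\mM_2 = \widehat\mSigma - \hat\sigma^2\mI_P$ shares its eigenvectors with $\widehat\mSigma$ and has eigenvalues $\hat\lambda_k - \hat\sigma^2$), the quantity of interest decomposes as
\begin{equation*}
\langle \widehat{\mW}\bmu_i,\ \widehat{\mW}\bmu_j\rangle
= \bmu_i^\top \widehat\U_K \hat\mGamma^{-1}\widehat\U_K^\top \bmu_j
= \sum_{k=1}^K \frac{(\hat\u_k^\top\bmu_i)(\hat\u_k^\top\bmu_j)}{\hat\lambda_k - \hat\sigma^2}.
\end{equation*}
Each summand is a ratio of a rank-one bilinear form in $\hat\u_k$ over a shifted empirical eigenvalue, which is exactly the form controlled by \Cref{thm:spiked_model}.

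For the numerator of the $k$-th term I would invoke the overlap limit \eqref{eq:thm-overlaps} with the fixed (deterministic) vectors $\a = \bmu_i$ and $\b = \bmu_j$, giving $(\hat\u_k^\top\bmu_i)(\hat\u_k^\top\bmu_j) \toas \mathbbm{1}_{\{\ell_k>\sqrt{c}\}}\psi_k(\u_k^\top\bmu_i)(\u_k^\top\bmu_j)$; legitimacy here relies on $\bmu_i,\bmu_j$ being deterministic with norms and alignments constant in $N,P$ by (A4). For the denominator I would combine the eigenvalue limit \eqref{eq:thm-eigs}, $\hat\lambda_k \toas \tilde\lambda_k$, with the assumed consistency $\hat\sigma^2 \toas \sigma^2$, so that $\hat\lambda_k - \hat\sigma^2 \toas \tilde\lambda_k - \sigma^2$. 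Because there are only $K$ terms, intersecting the finitely many probability-one events yields joint almost-sure convergence, and the continuous mapping theorem (applied to the algebra of these limits) then gives
\begin{equation*}
\langle \widehat{\mW}\bmu_i,\ \widehat{\mW}\bmu_j\rangle
\toas \sum_{k=1}^K \frac{\mathbbm{1}_{\{\ell_k>\sqrt{c}\}}\psi_k}{\tilde\lambda_k - \sigma^2}(\u_k^\top\bmu_i)(\u_k^\top\bmu_j) = \sum_{k=1}^K d_k(\u_k^\top\bmu_i)(\u_k^\top\bmu_j).
\end{equation*}
Recognizing that the right-hand side equals $(\mD^{1/2}\U_K^\top\bmu_i)^\top(\mD^{1/2}\U_K^\top\bmu_j) = \langle \mT\bmu_i, \mT\bmu_j\rangle$ establishes \eqref{eq:alignment_whitened_means}; the mismatch between the indicator $\mathbbm{1}_{\ell_k \ge \sqrt{c}}$ in the definition of $d_k$ and the strict one in \eqref{eq:thm-overlaps} is immaterial, since a direct computation gives $\psi_k = 0$ exactly at $\ell_k = \sqrt{c}$.

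The step that needs the most care is justifying that the ratio of limits is the limit of the ratio, i.e.\ that the random denominators stay bounded away from zero. For a recoverable spike ($\ell_k > \sqrt{c}$) one checks $\tilde\lambda_k - \sigma^2 = \beta_k\lambda_k - \sigma^2 > 0$ using $\lambda_k = \sigma^2(1+\ell_k)$ and $\beta_k > 1$, so the $k$-th term converges to a finite value. For a non-recoverable spike retained among the top $K$ eigenpairs ($\ell_k \le \sqrt{c}$), the numerator vanishes in the limit while the denominator tends to the bulk-edge value $\sigma^2[(1+\sqrt{c})^2 - 1] > 0$; hence the whole term converges to $0 = d_k(\u_k^\top\bmu_i)(\u_k^\top\bmu_j)$, consistent with $d_k = 0$ there. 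Working on the almost-sure event where all denominators are eventually positive, $\widehat{\mW}$ is well defined and the above limits hold simultaneously. Finally, \eqref{eq:asymptotic_rho} follows by one more application of the continuous mapping theorem to \eqref{eq:def-rho}: the squared norms $\|\widehat{\mW}\bmu_i\|^2$ and $\|\widehat{\mW}\bmu_j\|^2$ are the $i=j$ instances of \eqref{eq:alignment_whitened_means}, and since $x\mapsto|x|$ and division are continuous wherever the denominators $\langle\mT\bmu_i,\mT\bmu_i\rangle$ and $\langle\mT\bmu_j,\mT\bmu_j\rangle$ are assumed strictly positive, the normalized ratio $\rho_{i,j}$ converges to $\rho^\infty_{i,j}$.
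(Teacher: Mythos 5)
Your proof is correct and follows essentially the same route as the paper: expand $\langle \widehat{\mW}\bmu_i, \widehat{\mW}\bmu_j\rangle$ in the empirical eigenbasis as $\sum_{k=1}^K (\hat\u_k^\top\bmu_i)(\hat\u_k^\top\bmu_j)/(\hat\lambda_k-\hat\sigma^2)$, then apply the overlap limit \eqref{eq:thm-overlaps} and the eigenvalue limit \eqref{eq:thm-eigs} termwise together with $\hat\sigma^2\toas\sigma^2$. In fact you are more careful than the paper's own proof, which silently passes to the limit of the ratios and omits the derivation of \eqref{eq:asymptotic_rho}: your verification that the denominators stay bounded away from zero in both the recoverable ($\beta_k\lambda_k-\sigma^2>0$) and bulk ($\sigma^2[(1+\sqrt{c})^2-1]>0$) cases, and your observation that $\psi_k=0$ at $\ell_k=\sqrt{c}$ resolves the $\ge$ versus $>$ indicator mismatch, are correct and welcome additions.
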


\begin{proof}
Notice first that
\begin{align*}
\langle \widehat{\mW} \bmu_i,\ \widehat{\mW} \bmu_j \rangle= \bmu_i^\top \widehat\U_K \widehat\mGamma^{-1} \widehat\U_K^\top \bmu_j = \sum_{k=1}^K \frac{(\bmu_i^\top \hat\u_k)(\hat\u_k^\top \bmu_j)}{\hat\lambda_k-\hat\sigma^2}.
\end{align*}
The asymptotic limit of $(\bmu_i^\top \hat\u_k)(\hat\u_k^\top \bmu_j)$ is given by letting $\a=\bmu_i,\b=\bmu_j$ in \eqref{eq:thm-overlaps}. Retrieving also the limiting value of $\hat\lambda_k$ from \eqref{eq:thm-eigs} and recalling that $\hat\sigma^2\toas\sigma^2$, we get
\begin{align*}
\langle \widehat{\mW} \bmu_i,\ \widehat{\mW} \bmu_j \rangle \toas& \sum_{k=1}^K \frac{\mathbbm{1}_{\ell_k \ge \sqrt{c}} \, \psi_k(\bmu_i^\top \u_k)(\u_k^\top \bmu_j)}{\tilde\lambda_k^2-\sigma^2}\\
=&\sum_{k=1}^K d_k(\bmu_i^\top \u_k)(\u_k^\top \bmu_j)=\langle \mT \, \bmu_i, \mT \,\bmu_{j} \rangle,
\end{align*}
thus recovering \eqref{eq:alignment_whitened_means}.
\end{proof}

\begin{remark}[Classical regime]
Letting $c\to 0$, we get $\psi_k\to 1$ and $\tilde{\lambda}_k\to\lambda_k$. Hence $\mT \to \mW$ in the classical regime, implying $\rho_{i,j} \to  0$ for $i\neq j$.
\end{remark}

\begin{example}
Consider the case $K = 2$ with $\omega_1=\omega_2=1/2$, and suppose that $\bmu_1,\bmu_2$ have unit norm and alignment $\langle\bmu_1,\bmu_2\rangle=\cos\theta\in [0,1)$. With some algebraic calculation, we get $\lambda_1=\cos^2(\theta/2)+\sigma^2, \lambda_2=\sin^2(\theta/2)+\sigma^2$, from which it follows that $\ell_1=\cos^2(\theta/2)/\sigma^2, \ell_2=\sin^2(\theta/2)/\sigma^2$.
When $\ell_1,\ell_2>\sqrt{c}$, applying \Cref{prop:residual-alignment} leads to
\begin{equation*}
    \rho_{1,2}\toas \frac{q_1-q_2}{q_1+q_2}, \quad \text{where} \quad q_{k}= \frac{\ell_k^4-c \ell_k^2}{\ell_k(\ell_k+c)(\ell_k^2+c\ell_k+c)}.
\end{equation*}
It can be seen from the above equation that the alignment $\rho_{1,2}$ between the whitened means $\widehat{\mW}\bmu_1, \widehat{\mW}\bmu_1$ is asymptotically non-zero in the LDR with $c>0$, unless $\ell_1=\ell_2$, in which case the original means $\bmu_1,\bmu_2$ are already orthogonal, as $\ell_1=\ell_2$ implies $\theta/2=\pi/4$, meaning $\theta=\pi/2$.

A numerical example is shown in Figure \ref{fig:residual-alignment}.
We generate $N=5000$ observations $\x_n$ following \eqref{GMM} with $K=2$ and $P=500$. Here, $\omega_1 = 0.567 $ and $\omega_2 = 0.433$ and the means $\bmu_1,\bmu_2$ are unit-norm vectors satisfying $\bmu_1^\top \bmu_2 = 1/4$.

We compute the estimated squared alignments $\hat\zeta_k = (\hat\u_k^\top \u_k)^2$ and compare them with the theoretical limiting quantities $\zeta_k$.
The estimated residual alignment $\rho_{1,2}$ between $\widehat{\mW}\bmu_1, \widehat{\mW}\bmu_2$ is computed by \eqref{eq:def-rho} and compared to its theoretical counterpart $\rho^\infty_{1,2}$ given in \eqref{eq:asymptotic_rho}.
This is repeated for different levels of signal-to-noise ratio (SNR), here defined as $\text{SNR} := 1/\sigma^2$.
The estimated quantities are averaged over of $25$ independent Monte Carlo runs (of $N=5000$ samples each).
Gray dashed vertical lines mark the two critical SNRs $S_k^\star := (\sqrt{c} + 1)/\lambda_k$ dictated by the spiked model.

    For $1/\sigma^2 <S_1^\star$, both spikes are subcritical and no signal can be found ($\zeta_1=\zeta_2=0$). 
    For $S_1^\star< 1/\sigma^2 <S_2^\star$, only the first eigenvector $\u_1$ is partially recovered ($\zeta_1>0 = \zeta_2$) and the whitened means become collinear. 
    For $1/\sigma^2 >S_2^\star$, both $\u_1, \u_2$ are partially recovered ($\zeta_1,\zeta_2>0$) but the whitened means still have nonzero residual alignment as predicted by Lemma~\ref{prop:residual-alignment}.
    The observations closely track the theoretical curves, apart from some oscillation around the second phase transition, as usual in finite dimension.

\end{example}

\begin{figure}[!t]
    \centering
    \begin{tikzpicture}
\begin{axis}[
  width=\linewidth, height=0.45\linewidth,
  xmin=0.1, xmax=4.0, ymin=0.0, ymax=1.05,
  xlabel={$1/\sigma^2$ \, (SNR)},
  grid=major, grid style={draw=gray!30},
  legend cell align=left,
  legend columns=3,
  legend style={
  font=\small,
  draw=none,
  fill=white,
  fill opacity=0.85,
  at={(0.5,1.0)}, anchor=south,
  column sep=0.05ex  
  },
  extra x ticks={0.4245811816762847, 1.239133605130375},
  extra x tick labels={$S^\star_1$, $S^\star_2$},
]

\addplot+[no marks, very thick, blue, dotted] coordinates { (0.1 , 0) (0.2 , 0) (0.3 , 0) (0.4 , 0) (0.5 , 0) (0.6 , 0) (0.7 , 0) (0.8 , 0) (0.9 , 0) (1 , 0) (1.1 , 0) (1.2 , 0) (1.3 , 0.0702681) (1.4 , 0.169238) (1.5 , 0.251799) (1.6 , 0.321482) (1.7 , 0.380904) (1.8 , 0.432042) (1.9 , 0.476413) (2 , 0.515197) (2.1 , 0.549324) (2.2 , 0.579535) (2.3 , 0.606428) (2.4 , 0.630488) (2.5 , 0.652115) (2.6 , 0.671639) (2.7 , 0.689333) (2.8 , 0.70543) (2.9 , 0.720122) (3 , 0.733577) (3.1 , 0.745935) (3.2 , 0.757318) (3.3 , 0.76783) (3.4 , 0.777562) (3.5 , 0.786593) (3.6 , 0.794991) (3.7 , 0.802819) (3.8 , 0.810128) (3.9 , 0.816966) (4 , 0.823375) };
\addlegendentry{$\zeta_1$}

\addplot+[no marks, very thick, green!50!black, dashed] coordinates { (0.1 , 0) (0.2 , 0) (0.3 , 0) (0.4 , 0) (0.5 , 0.219879) (0.6 , 0.407961) (0.7 , 0.530375) (0.8 , 0.615097) (0.9 , 0.67652) (1 , 0.722698) (1.1 , 0.758443) (1.2 , 0.786782) (1.3 , 0.809705) (1.4 , 0.828564) (1.5 , 0.844307) (1.6 , 0.857615) (1.7 , 0.868991) (1.8 , 0.87881) (1.9 , 0.887358) (2 , 0.894859) (2.1 , 0.901486) (2.2 , 0.907378) (2.3 , 0.912646) (2.4 , 0.917382) (2.5 , 0.921659) (2.6 , 0.925538) (2.7 , 0.929071) (2.8 , 0.932301) (2.9 , 0.935264) (3 , 0.937991) (3.1 , 0.940507) (3.2 , 0.942836) (3.3 , 0.944998) (3.4 , 0.947009) (3.5 , 0.948884) (3.6 , 0.950636) (3.7 , 0.952276) (3.8 , 0.953815) (3.9 , 0.955261) (4 , 0.956623) };
\addlegendentry{$\zeta_2$}

\addplot+[no marks, very thick, red] coordinates { (0.4246 , 1) (0.5 , 1) (0.6 , 1) (0.7 , 1) (0.8 , 1) (0.9 , 1) (1 , 1) (1.1 , 1) (1.2 , 1) (1.3 , 0.677013) (1.4 , 0.479333) (1.5 , 0.383692) (1.6 , 0.324634) (1.7 , 0.283629) (1.8 , 0.253101) (1.9 , 0.229288) (2 , 0.21008) (2.1 , 0.194191) (2.2 , 0.180785) (2.3 , 0.169292) (2.4 , 0.15931) (2.5 , 0.150544) (2.6 , 0.142774) (2.7 , 0.135832) (2.8 , 0.129585) (2.9 , 0.123929) (3 , 0.11878) (3.1 , 0.114071) (3.2 , 0.109745) (3.3 , 0.105754) (3.4 , 0.10206) (3.5 , 0.0986302) (3.6 , 0.0954353) (3.7 , 0.0924513) (3.8 , 0.0896573) (3.9 , 0.0870351) (4 , 0.0845687) };
\addlegendentry{$\rho_{1,2}^{\infty}$}

\addplot+[only marks, mark=*, mark size=1.5pt, blue, mark options={fill=blue}] coordinates { (0.1 , 0.00270448) (0.2 , 0.00396203) (0.3 , 0.00361394) (0.4 , 0.00505816) (0.5 , 0.00484515) (0.6 , 0.00568397) (0.7 , 0.00754653) (0.8 , 0.0166326) (0.9 , 0.0271371) (1 , 0.0344202) (1.1 , 0.0560615) (1.2 , 0.0793369) (1.3 , 0.109624) (1.4 , 0.173673) (1.5 , 0.261646) (1.6 , 0.327316) (1.7 , 0.361583) (1.8 , 0.414459) (1.9 , 0.474256) (2 , 0.521864) (2.1 , 0.54015) (2.2 , 0.576411) (2.3 , 0.598064) (2.4 , 0.621698) (2.5 , 0.650598) (2.6 , 0.667393) (2.7 , 0.69392) (2.8 , 0.712599) (2.9 , 0.719773) (3 , 0.73184) (3.1 , 0.748415) (3.2 , 0.758008) (3.3 , 0.767528) (3.4 , 0.780737) (3.5 , 0.780782) (3.6 , 0.789783) (3.7 , 0.80644) (3.8 , 0.810316) (3.9 , 0.819143) (4 , 0.824879) };
\addlegendentry{$\hat\zeta_1 = (\hat u_1^{\top} u_1)^2$}

\addplot+[only marks, mark=x, mark size=2.5pt, draw=green!50!black] coordinates { (0.1 , 0.00335126) (0.2 , 0.00483132) (0.3 , 0.0141124) (0.4 , 0.0437465) (0.5 , 0.210069) (0.6 , 0.384142) (0.7 , 0.52952) (0.8 , 0.608493) (0.9 , 0.676066) (1 , 0.724713) (1.1 , 0.755936) (1.2 , 0.790503) (1.3 , 0.812883) (1.4 , 0.825571) (1.5 , 0.844903) (1.6 , 0.855996) (1.7 , 0.871341) (1.8 , 0.880255) (1.9 , 0.886375) (2 , 0.895159) (2.1 , 0.902909) (2.2 , 0.908409) (2.3 , 0.911098) (2.4 , 0.917496) (2.5 , 0.921728) (2.6 , 0.92676) (2.7 , 0.931086) (2.8 , 0.931327) (2.9 , 0.935569) (3 , 0.938586) (3.1 , 0.940273) (3.2 , 0.941816) (3.3 , 0.944478) (3.4 , 0.945333) (3.5 , 0.948117) (3.6 , 0.950354) (3.7 , 0.951307) (3.8 , 0.953914) (3.9 , 0.954464) (4 , 0.957637) };
\addlegendentry{$\hat\zeta_2 = (\hat u_2^{\top} u_2)^2$}

\addplot+[only marks, mark=o, mark size=1.5pt, red] coordinates { (0.5 , 0.982046) (0.6 , 0.996769) (0.7 , 0.997748) (0.8 , 0.96582) (0.9 , 0.869301) (1 , 0.832721) (1.1 , 0.819688) (1.2 , 0.786572) (1.3 , 0.882533) (1.4 , 0.729233) (1.5 , 0.44745) (1.6 , 0.309179) (1.7 , 0.338308) (1.8 , 0.286565) (1.9 , 0.23412) (2 , 0.207942) (2.1 , 0.213341) (2.2 , 0.205594) (2.3 , 0.170752) (2.4 , 0.17067) (2.5 , 0.144637) (2.6 , 0.145012) (2.7 , 0.126801) (2.8 , 0.125226) (2.9 , 0.131518) (3 , 0.123895) (3.1 , 0.112407) (3.2 , 0.110307) (3.3 , 0.0990433) (3.4 , 0.0990017) (3.5 , 0.101287) (3.6 , 0.100211) (3.7 , 0.0884764) (3.8 , 0.0889184) (3.9 , 0.090797) (4 , 0.0897387) };
\addlegendentry{$\hat\rho_{1,2}$}

\addplot+[gray, densely dashed, thick, forget plot, no marks] coordinates { (0.4245811816762847, 0.0) (0.4245811816762847, 1.05) };

\addplot+[gray, densely dashed, thick, no marks] coordinates { (1.239133605130375, 0.0) (1.239133605130375, 1.05) };

\end{axis}
\end{tikzpicture}
    \caption{Residual and eigenvector alignments in the LDR ($K=2$).}
    \label{fig:residual-alignment}
\end{figure}
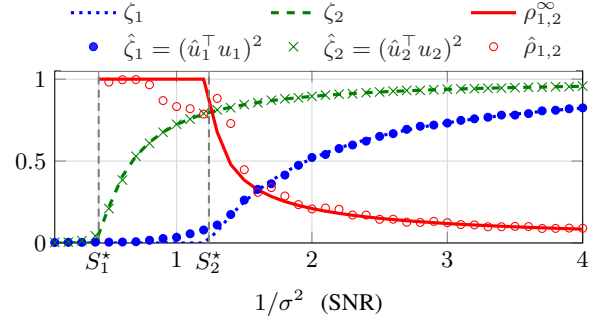
\vspace{-4mm}

\subsection{Corrected whitening}\label{ssec:def-cwhiten}

Lemma~\ref{prop:residual-alignment} shows that the empirical whitening map \(\hat{\mW}\) fails to orthogonalize the means in the LDR because both eigenvalues and eigenvector alignments are distorted according to \eqref{eq:thm-eigs}--\eqref{eq:thm-overlaps}, {implying that the coordinates of the whitened means are  biased by some scalar factors.} However, these factors can be predicted from \eqref{eq:thm-eigs}--\eqref{eq:thm-overlaps}. Hence, one can perform the following ``scalar'' correction: replace each empirical eigenvalue \(\hat\lambda_k\) by a consistent estimator of $\lambda_k$ and compensate for eigenvector alignment $\hat\u_k^\top \u_k$ shrinkage by dividing it by a consistence estimator of$\psi_k$. Similarly corrected estimators based on RMT results have been derived in other contexts (see, e.~g.,~\cite{nadakuditi2014optshrink}, \cite{gavish2014optimal}).

{Concretely, we proceed as follows. We may express $\lambda_k$ and $\beta_k$ as functions of $\ell_k$ in \eqref{eq:thm-eigs}, and then express $\ell_k$ as a function of $\tilde{\lambda}_k$ (when $\ell_k > \sqrt{c}$), which is tantamount to choosing the positive root of $\ell_k^2 + (1+c - \tilde\lambda_k/\sigma^2 )\ell_k + c = 0$. By replacing $\sigma$ by $\hat{\sigma}$ and $\tilde\lambda_k$ by $\hat{\lambda}_k$, we obtain the expression of the corrected spike:}
\begin{equation*}
\hat\ell_k^{\,\rm c}
\;=\; \frac{1}{2} \left(
\displaystyle \frac{\hat\lambda_k}{\hat\sigma^2} - (1+c)
\;+\;
\sqrt{\Big(\frac{\hat\lambda_k}{\hat\sigma^2} - (1+c)\Big)^2 - 4c}
\right).
\end{equation*}
Then the corrected estimator of $\lambda_k$ is 
$\hat\lambda_k^{\,\rm c} \;=\; \hat\sigma^2\,(1+\hat\ell_k^{\,\rm c})$.
Next, define the finite-$(P,N)$ corrected versions of the overlap factors:
\begin{equation*}
\hat\beta_k^{\,\rm c} \;=\; 1 + \frac{c}{\hat\ell_k^{\,\rm c}},
\qquad
\hat\psi_k^{\,\rm c}
\;=\;
1 - \frac{\hat\beta_k^{\,\rm c}-1}{\hat\beta_k^{\,\rm c}}
\,\frac{1+\hat\ell_k^{\,\rm c}}{\hat\ell_k^{\,\rm c}}.
\end{equation*}
Using these quantities, we can correct both eigenvalue inflation and eigenvector shrinkage via the diagonal scaling
\begin{equation}
\label{eq:scalar-correc}
\mPhi \; := \; \diag\!\Bigg(
\frac{\mathbbm{1}_{\{\hat\ell_1^{\,\rm c}>\sqrt{c}\}}}{\sqrt{(\hat\lambda_1^{\,\rm c}-\hat\sigma^2)\,\hat\psi_1^{\,\rm c}}},
\;\ldots,\;
\frac{\mathbbm{1}_{\{\hat\ell_K^{\,\rm c}>\sqrt{c}\}}}{\sqrt{(\hat\lambda_K^{\,\rm c}-\hat\sigma^2)\,\hat\psi_K^{\,\rm c}}}
\Bigg),
\end{equation}
by simply defining the \emph{corrected whitening} matrix as $\widehat{\mW}^{\,\rm c} := \mPhi \widehat\U_K^\top \, \in \, \mathbb{R}^{K\times P}$. These definitions directly lead to a proof of the following theorem, which is our main result.

\begin{theorem}[Asymptotic dot products under corrected whitening]
Assume (A1)–(A4) and consider the same notation as in Theorem~\ref{thm:spiked_model}.
Let
$\mH := \diag(\mathbbm{1}_{\{\ell_1>\sqrt{c}\}}, \ldots, \mathbbm{1}_{\{\ell_K>\sqrt{c}\}})$.
Then, for any $i,j\in\{1,\ldots,K\}$,
\begin{equation*}
\big\langle \widehat{\mW}^{\,\rm c}\bmu_i,\ \widehat{\mW}^{\,\rm c}\bmu_j\big\rangle
\ \xrightarrow{\mathrm{a.s.}}\
\bmu_i^\top \, \U_K\,\H\, \mGamma^{-1} \U_K^\top \, \bmu_j.
\end{equation*}
In particular, if $\ell_k>\sqrt{c}$ for all $k=1,\ldots,K$, then
\begin{equation*}
\big\langle \widehat{\mW}^{\,\rm c}\bmu_i,\ \widehat{\mW}^{\,\rm c}\bmu_j\big\rangle
\ \xrightarrow{\mathrm{a.s.}}\ 
\big\langle \mW \bmu_i,\ \mW \bmu_j\big\rangle = \omega_i^{-1}\,\delta_{ij}.
\end{equation*}
\end{theorem}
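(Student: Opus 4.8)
The plan is to expand the corrected inner product into a finite sum over the $K$ retained eigenpairs and then pass to the almost-sure limit term by term, invoking Theorem~\ref{thm:spiked_model}. Writing $\widehat{\mW}^{\,\rm c} = \mPhi\,\widehat\U_K^\top$, the coefficient matrix $\mPhi$ is diagonal, so
\[
\big\langle \widehat{\mW}^{\,\rm c}\bmu_i,\ \widehat{\mW}^{\,\rm c}\bmu_j\big\rangle
= \bmu_i^\top \widehat\U_K\, \mPhi^2\, \widehat\U_K^\top \bmu_j
= \sum_{k=1}^K \frac{\mathbbm{1}_{\{\hat\ell_k^{\,\rm c}>\sqrt{c}\}}}{(\hat\lambda_k^{\,\rm c}-\hat\sigma^2)\,\hat\psi_k^{\,\rm c}}\,(\bmu_i^\top \hat\u_k)(\hat\u_k^\top \bmu_j).
\]
The task then reduces to identifying, for each $k$, the a.s.\ limit of the scalar coefficient and of the overlap $(\bmu_i^\top \hat\u_k)(\hat\u_k^\top \bmu_j)$, the latter being supplied directly by \eqref{eq:thm-overlaps} with $\a=\bmu_i$, $\b=\bmu_j$.

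The first substantive step is to establish consistency of the corrected quantities on the supercritical indices. By construction $\hat\ell_k^{\,\rm c}$ is the larger (positive) root of $\ell^2+(1+c-\hat\lambda_k/\hat\sigma^2)\ell+c=0$, i.e.\ the continuous inverse of the deterministic spike-to-outlier map $\ell\mapsto\tilde\lambda=\sigma^2(1+\ell)(1+c/\ell)$ of \eqref{eq:thm-eigs}. Since $\hat\lambda_k\toas\tilde\lambda_k$ and $\hat\sigma^2\toas\sigma^2$, the continuous mapping theorem yields $\hat\ell_k^{\,\rm c}\toas\ell_k$ for every $k$ with $\ell_k>\sqrt{c}$; one checks that $\ell_k$ is indeed the larger root (the roots multiply to $c$, so the two roots are $\ell_k$ and $c/\ell_k$, and $\ell_k$ is the larger precisely when $\ell_k>\sqrt{c}$), which is the branch selected by the $+\sqrt{\cdot}$ sign. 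Propagating this through the continuous formulas defining $\hat\lambda_k^{\,\rm c}$, $\hat\beta_k^{\,\rm c}$, and $\hat\psi_k^{\,\rm c}$ gives $\hat\lambda_k^{\,\rm c}\toas\lambda_k$ and $\hat\psi_k^{\,\rm c}\toas\psi_k$, so the scalar coefficient tends to $1/\big((\lambda_k-\sigma^2)\psi_k\big)$. Multiplying by the overlap limit $\psi_k(\bmu_i^\top\u_k)(\u_k^\top\bmu_j)$ from \eqref{eq:thm-overlaps}, the $\psi_k$ factors cancel and the supercritical term converges to $(\bmu_i^\top\u_k)(\u_k^\top\bmu_j)/(\lambda_k-\sigma^2)$. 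Since $\lambda_k-\sigma^2=\mGamma_{kk}$, this is exactly the $k$-th summand of $\bmu_i^\top\U_K\,\H\,\mGamma^{-1}\U_K^\top\bmu_j$.

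The \textbf{main obstacle} is the subcritical/boundary regime $\ell_k\le\sqrt{c}$, where the coefficient presents a $0\times\infty$ indeterminacy. Here \eqref{eq:thm-eigs} forces $\hat\lambda_k\toas\sigma^2(1+\sqrt{c})^2$, whence $\hat\ell_k^{\,\rm c}\toas\sqrt{c}$ and therefore $\hat\psi_k^{\,\rm c}\toas0$, so the scalar coefficient would diverge if left uncorrected. This is precisely what the indicator $\mathbbm{1}_{\{\hat\ell_k^{\,\rm c}>\sqrt{c}\}}$ is designed to neutralize. Under the generic nondegeneracy $\ell_k\neq\sqrt{c}$ for all $k$ (which I would state explicitly), a strictly subcritical spike satisfies $\hat\ell_k^{\,\rm c}<\sqrt{c}$ eventually almost surely, so the indicator vanishes and the entire term is identically zero for large $N,P$, matching $\H_{kk}=\mathbbm{1}_{\{\ell_k>\sqrt{c}\}}=0$. (Alternatively, one can lean on the delocalization in \eqref{eq:thm-overlaps}, which sends the overlap to zero, and argue that it dominates the possible blow-up; but the indicator argument is cleaner and avoids quantifying rates.) This is the one point where genuine care is required.

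Collecting the supercritical and subcritical contributions gives
\[
\big\langle \widehat{\mW}^{\,\rm c}\bmu_i,\ \widehat{\mW}^{\,\rm c}\bmu_j\big\rangle
\ \toas\ \sum_{k=1}^K \H_{kk}\,\frac{(\bmu_i^\top\u_k)(\u_k^\top\bmu_j)}{\mGamma_{kk}}
= \bmu_i^\top\U_K\,\H\,\mGamma^{-1}\U_K^\top\bmu_j,
\]
which is the first claim. For the special case $\ell_k>\sqrt{c}$ for all $k$, we have $\H=\mI_K$, so the limit reduces to $\bmu_i^\top\U_K\mGamma^{-1}\U_K^\top\bmu_j=\bmu_i^\top\mW^\top\mW\bmu_j=\langle\mW\bmu_i,\mW\bmu_j\rangle$; the population whitening identities $\|\mW\bmu_k\|^2=\omega_k^{-1}$ and $\langle\mW\bmu_i,\mW\bmu_j\rangle=0$ for $i\neq j$ established in Section~\ref{sec:whiteningLD} then give $\omega_i^{-1}\delta_{ij}$, completing the proof.
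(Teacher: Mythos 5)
Your overall route---expanding $\big\langle \widehat{\mW}^{\,\rm c}\bmu_i,\ \widehat{\mW}^{\,\rm c}\bmu_j\big\rangle = \sum_{k=1}^K \mPhi_{kk}^2\,(\bmu_i^\top\hat\u_k)(\hat\u_k^\top\bmu_j)$, establishing $\hat\ell_k^{\,\rm c}\toas\ell_k$ by inverting the spike-to-outlier map (your root-selection check via the product of roots equalling $c$ is correct), propagating consistency to $\hat\lambda_k^{\,\rm c}$ and $\hat\psi_k^{\,\rm c}$, and cancelling $\psi_k$ against the overlap limit from \eqref{eq:thm-overlaps}---is exactly the direct argument the paper intends (it gives no written proof, asserting only that the definitions ``directly lead'' to one). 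Your supercritical analysis and the reduction of the second claim to $\omega_i^{-1}\delta_{ij}$ via $\mW^\top\mW=\U_K\mGamma^{-1}\U_K^\top$ are correct.

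The genuine gap is in your handling of the subcritical indices, precisely the point you flag as the main obstacle. Your preferred argument---that $\ell_k<\sqrt{c}$ implies $\hat\ell_k^{\,\rm c}<\sqrt{c}$ eventually almost surely, so the indicator kills the term---does not follow from anything available and is in fact unreliable: $\hat\ell_k^{\,\rm c}>\sqrt{c}$ holds if and only if $\hat\lambda_k>\hat\sigma^2(1+\sqrt{c})^2$, and since $\hat\lambda_k$ converges a.s.\ \emph{exactly} to the Marchenko--Pastur edge $\sigma^2(1+\sqrt{c})^2$, the a.s.\ convergence says nothing about the sign of $\hat\lambda_k-\hat\sigma^2(1+\sqrt{c})^2$. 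The finite-$(P,N)$ fluctuations of $\hat\lambda_k$ about the edge are two-sided (Tracy--Widom, at scale $N^{-2/3}$, dominating the $O(N^{-1})$ fluctuations of $\hat\sigma^2$), so the indicator equals one with asymptotically positive probability and cannot be argued to vanish eventually a.s. The viable resolution is the one you dismiss parenthetically: on the event $\{\hat\ell_k^{\,\rm c}>\sqrt{c}\}$ the factor $\hat\lambda_k^{\,\rm c}-\hat\sigma^2=\hat\sigma^2\hat\ell_k^{\,\rm c}$ is bounded below by $\hat\sigma^2\sqrt{c}$, so the coefficient can blow up only through $1/\hat\psi_k^{\,\rm c}$, which diverges at a rate governed by $(\hat\ell_k^{\,\rm c}-\sqrt{c})^{-1}$ and hence, via the Tracy--Widom scale, polynomially slowly; meanwhile delocalization of bulk-edge eigenvectors drives the overlap $(\bmu_i^\top\hat\u_k)(\hat\u_k^\top\bmu_j)$ to zero at a faster polynomial rate, making the product vanish. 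But this is a genuine $0\times\infty$ balance that the first-order limits \eqref{eq:thm-eigs}--\eqref{eq:thm-overlaps} of Theorem~\ref{thm:spiked_model} alone cannot decide---you need quantitative (rate) versions of the spiked-model results, not just the a.s.\ limits you invoke. The same issue defeats your argument at exact criticality $\ell_k=\sqrt{c}$, which your added genericity assumption excludes but the theorem as stated does not.
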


\section{Application to spherical GMM}\label{sec:application}

We now illustrate the impact of our results, and in particular of our corrected whitening matrix, on the performance of a tensor-based algorithm for spherical GMM estimation. 
For concreteness, we will take the \textsc{LearnGMM} algorithm of \cite{hsu2013learning}, which is arguably one of the simplest algorithms of this kind.
In \textsc{LearnGMM}, after the estimated whitening matrix $\widehat{\mW}$ is obtained from $\{\x_n\}_{n=1}^N$, a new set $\{\x_n\}_{n=N+1}^{2N}$ of $N$ independent samples is required to estimate the whitened tensor moment $\cM_3(\mW,\mW,\mW)$ defined in \eqref{dec-tens}.  Letting $\bxi_n=\widehat{\W}\x_n$ for $n\in\{N+1,\ldots,2N\}$, the estimator $\widehat{\cM}_3(\widehat\mW,\widehat\mW,\widehat\mW)$  of  $\cM_3(\mW,\mW,\mW)$ in \textsc{LearnGMM} is given by 
\begin{multline}
\label{est-M3}
\widehat{\cM}_3(\widehat{\mW},\widehat{\mW},\widehat{\mW})
:= \frac{1}{N}\sum\nolimits_{n=N+1}^{2N} \bxi_n^{\otimes 3} \\
- \hat\sigma^2 \sum\nolimits_{k=1}^K 
\Bigl(
   \bar\bxi\otimes\e_k\otimes\e_k
   + \e_k\otimes\bar\bxi\otimes\e_k
   + \e_k\otimes\e_k\otimes\bar\bxi
  \Bigr),
\end{multline}
where $\bar\bxi=\frac{1}{N}\sum_{n=N+1}^{2N}\bxi_n$ and $\e_k$ denotes the $k$th canonical basis vector of $\RR^K$. The authors of \cite{hsu2013learning} then argued that $\widehat{\cM}_3(\widehat{\mW},\widehat{\mW},\widehat{\mW})\to \cM_3(\mW,\mW,\mW)$ as $N \to \infty$ with fixed $P$.
One can then apply a spectral decomposition algorithm to $\cM_3(\mW,\mW,\mW)$, thereby recovering the vectors $\v_k$, which can be transformed back into estimates of the component means $\bmu_k$.
In particular, the decomposition procedure used in \textsc{LearnGMM} consists in computing the spectral decomposition of a symmetric matrix obtained by contracting  $\cM_3(\mW,\mW,\mW)$ with a random vector.

\subsection{Corrected whitened third moment} \label{sec:tensordec}

In the LDR, the estimator $\widehat{\cM}_3(\widehat{\mW},\widehat{\mW},\widehat{\mW})$ of \eqref{est-M3} no longer converges to the whitened moment tensor $\cM_3(\mW,\mW,\mW)$, as a consequence of \Cref{prop:residual-alignment}. 
Nonetheless, we claim in the following that this property can be recovered by simply replacing the whitening matrix by our corrected version $\widehat\mW^{\,\rm c}$.
Specifically, define the corrected estimator $ \widehat{\cM}^{\,\rm c}_3(\widehat{\mW}^{\,\rm c},\widehat{\mW}^{\,\rm c},\widehat{\mW}^{\,\rm c})$ by modifying the formula \eqref{est-M3} as follows: 
\begin{multline}
\label{est-M3-corr}
\widehat{\cM}^{\,\rm c}_3(\widehat{\mW}^{\,\rm c},\widehat{\mW}^{\,\rm c},\widehat{\mW}^{\,\rm c})
:= \frac{1}{N}\sum\nolimits_{n=N+1}^{2N} (\bxi_n^{\,\rm c})^{\otimes 3} \\
- 
\sum\nolimits_{k=1}^K \frac{\hat\sigma^2 }{\hat\lambda_k^{\rm c}\hat\psi_k^{\rm c}}  \Bigl(
   \bar\bxi^{\,\rm c}\otimes\e_k\otimes\e_k
   + \e_k\otimes\bar\bxi^{\,\rm c}\otimes\e_k
   + \e_k\otimes\e_k\otimes\bar\bxi^{\,\rm c}
  \Bigr),
\end{multline}
where $\bxi_n^{\,\rm c} = \widehat{\mW}^{\,\rm c}\x_n$ and $\bar\bxi^{\,\rm c}=\frac{1}{N}\sum_{n=N+1}^{2N}\bxi_n^{\,\rm c}$.

\begin{theorem}[Consistency of the corrected whitened third moment]
Assume (A1)–(A4) and consider the same notation as in Theorem~\ref{thm:spiked_model}. If \(\ell_k>\sqrt{c}\) for all \(k\in\{1,\ldots,K\}\), then
\[
 \widehat{\cM}^{\,\rm c}_3(\widehat{\mW}^{\,\rm c},\widehat{\mW}^{\,\rm c},\widehat{\mW}^{\,\rm c})
\xrightarrow{\mathrm{a.s.}}\ \cM_3(\mW,\mW,\mW).
\]
\end{theorem}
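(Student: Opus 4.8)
The plan is to condition on the first data batch $\{\x_n\}_{n=1}^N$ (equivalently on $\widehat{\mW}^{\,\rm c}$, which is measurable with respect to it) and to exploit that the second batch $\{\x_n\}_{n=N+1}^{2N}$ is independent of it. Let $\mathcal F_N$ denote the $\sigma$-algebra generated by the first batch. Writing each second-batch sample as $\x_n=\bmu_k+\sigma\g_n$ with $\g_n\sim\mathcal N(\zero,\mI_P)$ when $\x_n$ is drawn from component $k$, conditionally on $\mathcal F_N$ the whitened samples $\bxi_n^{\,\rm c}=\widehat{\mW}^{\,\rm c}\x_n$ are i.i.d.\ and, with probability $\omega_k$, equal $\m_k+\bepsilon_n$, where $\m_k:=\widehat{\mW}^{\,\rm c}\bmu_k$ and $\bepsilon_n:=\sigma\,\widehat{\mW}^{\,\rm c}\g_n\sim\mathcal N(\zero,\C)$ with $\C:=\sigma^2\widehat{\mW}^{\,\rm c}(\widehat{\mW}^{\,\rm c})^\top=\sigma^2\mPhi^2$. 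The key structural fact is that, because $\widehat\U_K^\top\widehat\U_K=\mI_K$, the matrix $\C$ is \emph{diagonal}: the whitened noise is uncorrelated across the $K$ coordinates, which is precisely why a correction built only from the $\e_l\otimes\e_l$ terms can succeed, and the whole task reduces to getting the $K$ per-coordinate variances right. The quantities $\m_k$, $\bar\m:=\sum_k\omega_k\m_k=\widehat{\mW}^{\,\rm c}\bar\bmu$ and $\C$ are all $\mathcal F_N$-measurable.

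Next I would apply a conditional strong law of large numbers to the two empirical averages in \eqref{est-M3-corr}. Using the centered-Gaussian moment identity $\E[(\m+\bepsilon)^{\otimes3}]=\m^{\otimes3}+(\text{symmetrization of }\m\otimes\C)$ (the pure third moment of $\bepsilon$ vanishes) and summing over components, the conditional mean becomes
\begin{multline*}
\E\bigl[(\bxi_n^{\,\rm c})^{\otimes3}\mid\mathcal F_N\bigr]
=\sum_{k=1}^K\omega_k\,\m_k^{\otimes3}\\
+\sum_{l=1}^K[\C]_{ll}\bigl(\bar\m\otimes\e_l\otimes\e_l+\e_l\otimes\bar\m\otimes\e_l+\e_l\otimes\e_l\otimes\bar\m\bigr),
\end{multline*}
where diagonality of $\C$ turns the symmetrized $\bar\m\otimes\C$ into exactly the canonical-basis form that is subtracted in \eqref{est-M3-corr}; similarly $\bar\bxi^{\,\rm c}\toas\bar\m$.

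I would then take the large-$(N,P)$ limits of these $\mathcal F_N$-measurable pieces. Writing $[\m_k]_a=[\mPhi]_{aa}(\hat\u_a^\top\bmu_k)$ and noting that $\bmu_k\in\mathrm{span}\{\u_1,\dots,\u_K\}$ by (A2), Theorem~\ref{thm:spiked_model} and its remark give $\hat\u_a^\top\u_a\toas\sqrt{\psi_a}$ and $\hat\u_a^\top\u_b\toas0$ for $b\neq a$ (under the harmless sign convention $\hat\u_a^\top\u_a\ge0$), so $\hat\u_a^\top\bmu_k\toas\sqrt{\psi_a}\,(\u_a^\top\bmu_k)$. Combined with the consistency of the corrected estimators established in the preceding theorem ($\hat\ell_k^{\,\rm c}\toas\ell_k$, hence $\hat\lambda_k^{\,\rm c}\toas\lambda_k$ and $\hat\psi_k^{\,\rm c}\toas\psi_k$), this yields $[\mPhi]_{aa}\toas((\lambda_a-\sigma^2)\psi_a)^{-1/2}$ and therefore $[\m_k]_a\toas(\u_a^\top\bmu_k)/\sqrt{\lambda_a-\sigma^2}=[\mW\bmu_k]_a$, i.e.\ $\m_k\toas\mW\bmu_k$ coordinate-wise and $\sum_k\omega_k\m_k^{\otimes3}\toas\cM_3(\mW,\mW,\mW)$. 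The crux is that the per-coordinate correction weights in \eqref{est-M3-corr} are designed to reproduce, in the limit, the diagonal whitened noise variances: one verifies, from the same consistency results, that each subtracted weight shares the almost-sure limit $\sigma^2/((\lambda_l-\sigma^2)\psi_l)$ of $[\C]_{ll}$, while $\bar\bxi^{\,\rm c}$ and $\bar\m$ share the same limit as well, so the entire noise cross-term is cancelled and only the signal tensor survives.

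Assembling the pieces gives $\widehat{\cM}^{\,\rm c}_3\toas\cM_3(\mW,\mW,\mW)$. I expect the main obstacle to be making the conditional law of large numbers rigorous: because the conditional law of $\bxi_n^{\,\rm c}$ depends on $N$ through $\widehat{\mW}^{\,\rm c}$, this is a triangular-array statement rather than a single i.i.d.\ average. I would handle it by restricting to the almost-sure event on which $\|\widehat{\mW}^{\,\rm c}\|$ is eventually bounded (it converges to $\|\mW\|<\infty$), on which every entry of $(\bxi_n^{\,\rm c})^{\otimes3}$ has uniformly bounded conditional moments of all orders; a Rosenthal--Marcinkiewicz bound then yields $\E\bigl[\,|\,\text{entry}-\text{cond.\ mean}\,|^{4}\mid\mathcal F_N\bigr]=\cO{N^{-2}}$, which is summable in $N$, so conditional Borel--Cantelli gives almost-sure convergence of each entry to its conditional mean. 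The only remaining subtlety, the sign ambiguity of the empirical eigenvectors, is absorbed by the stated sign convention and is anyway immaterial to the downstream tensor decomposition, which is invariant to a common orthogonal change of basis.
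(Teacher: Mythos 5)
Your overall architecture is the natural one and is exactly what the paper's construction is designed for (the paper states this theorem without a written proof): condition on the first batch, note that conditionally the whitened samples are a mixture of $\mathcal N(\m_k,\C)$ with $\m_k=\widehat{\mW}^{\,\rm c}\bmu_k$ and \emph{diagonal} noise covariance $\C=\sigma^2\mPhi^2$ (since $\widehat\U_K^\top\widehat\U_K=\mI_K$), apply the Gaussian third-moment identity, use Theorem~\ref{thm:spiked_model} plus consistency of $\hat\ell_k^{\,\rm c},\hat\lambda_k^{\,\rm c},\hat\psi_k^{\,\rm c}$ to get $\m_k\toas\mW\bmu_k$ coordinatewise (your sign convention and the use of (A2) to place $\bmu_k$ in $\mathrm{span}\{\u_1,\dots,\u_K\}$ are both appropriate), and handle the $N$-dependent conditional law by a fourth-moment/Borel--Cantelli triangular-array argument. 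All of that is sound.

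The genuine gap is at the single step where the theorem is decided: the claimed cancellation of the noise term. You assert that the subtracted weight $\hat\sigma^2/(\hat\lambda_l^{\rm c}\hat\psi_l^{\rm c})$ in \eqref{est-M3-corr} shares the almost-sure limit $\sigma^2/((\lambda_l-\sigma^2)\psi_l)$ of $[\C]_{ll}$. This is false as a computation: since $\hat\lambda_l^{\,\rm c}\toas\lambda_l$ and $\hat\psi_l^{\,\rm c}\toas\psi_l$, the printed weight converges to $\sigma^2/(\lambda_l\psi_l)=1/((1+\ell_l)\psi_l)$, whereas $[\C]_{ll}=\sigma^2[\mPhi^2]_{ll}=\sigma^2/\bigl((\hat\lambda_l^{\,\rm c}-\hat\sigma^2)\hat\psi_l^{\,\rm c}\bigr)\toas\sigma^2/((\lambda_l-\sigma^2)\psi_l)=1/(\ell_l\psi_l)$. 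Taken literally, \eqref{est-M3-corr} therefore leaves a nonvanishing residual
\begin{equation*}
\sum_{l=1}^K \frac{1}{\ell_l(1+\ell_l)\psi_l}\,\bigl(\mW\bar\bmu\otimes\e_l\otimes\e_l+\e_l\otimes\mW\bar\bmu\otimes\e_l+\e_l\otimes\e_l\otimes\mW\bar\bmu\bigr),
\end{equation*}
where $\bar\bmu=\sum_k\omega_k\bmu_k$, and $\mW\bar\bmu=\sum_k\omega_k^{1/2}\v_k\neq\zero$ because the $\v_k$ are orthonormal and $\omega_k>0$. Exact cancellation requires the weight $\hat\sigma^2/\bigl((\hat\lambda_l^{\,\rm c}-\hat\sigma^2)\hat\psi_l^{\,\rm c}\bigr)=1/(\hat\ell_l^{\,\rm c}\hat\psi_l^{\,\rm c})$, which matches the definition of $\mPhi$ (note that $\mPhi$ does use $\hat\lambda_k^{\,\rm c}-\hat\sigma^2$); the analogous factor $1/(\hat\lambda_k-\hat\sigma^2)$ is likewise absent from the uncorrected formula \eqref{est-M3}, since $\widehat\mW\widehat\mW^\top=\hat\mGamma^{-1}\neq\mI_K$. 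So in all likelihood the paper's \eqref{est-M3-corr} carries a typo and your proof strategy is correct for the intended estimator --- but a blind proof must either verify this identity or flag the mismatch, and instead you asserted a false limit at precisely the step on which the consistency claim hinges.
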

\begin{remark}[Estimation Pseudocode]
\label{rem:pseudocode}
The above consistency result justifies the estimation procedure obtained by combining our corrected estimates with the tensor decomposition and GMM parameter estimation steps of \textsc{LearnGMM} \cite{hsu2013learning}. 
We summarize in Algorithm 1 below the resulting algorithmic procedure in pseudocode form, using the same sample-splitting strategy as in \textsc{LearnGMM}.
\end{remark}

\begin{tcolorbox}[enhanced,breakable,sharp corners,boxrule=0.4pt,
  colback=white,colframe=black,title=\footnotesize{Algorithm 1: \textsc{LearnGMM} corrected for large-dimensional regime}]
\footnotesize
\textbf{Input:} $2N$ samples $\{\x_n\}_{n=1}^{2N}$, number of components $K$, ratio $c=P/N$.\\
\textbf{Output:} Estimates of $(\omega_k,\bmu_k)_{k=1}^K$ (up to permutation) as in \textsc{LearnGMM}.

\vspace{-1mm}
\begin{enumerate}[1)]
\item \textbf{Estimate variance and covariance (first half-sample).}
\[
\hat\sigma^2 \;=\; \frac{1}{NP}\sum_{n=1}^{N}\|\x_n\|^2,
\qquad
\widehat\mSigma \;=\; \frac{1}{N}\sum_{n=1}^{N}\x_n\x_n^\top.
\]

\item \textbf{Top-$K$ eigendecomposition.} Compute the $K$ leading eigenpairs $(\hat\lambda_k,\hat\u_k)_{k=1}^K$ of $\widehat\mSigma$, and set $\widehat\U_K=[\hat\u_1,\ldots,\hat\u_K]$.

\item \textbf{Corrected whitening matrix.} Compute $\mPhi$ as in \eqref{eq:scalar-correc} and then form
\[
\widehat{\mW}^{\,\rm c}=\mPhi\,\widehat\U_K^\top \in \RR^{K\times P}.
\]

\item \textbf{Whiten independent samples (second half-sample).} For $n=N+1,\ldots,2N$,
\[
\bxi_n^{\,\rm c}=\widehat{\mW}^{\,\rm c}\x_n,
\qquad
\bar\bxi^{\,\rm c}=\frac{1}{N}\sum_{n=N+1}^{2N}\bxi_n^{\,\rm c}.
\]

\item \textbf{Corrected whitened third moment tensor.} Compute $\widehat{\cM}_3^{\,\rm c}$ as in \eqref{est-M3-corr}.

\item \textbf{Spectral step (as in \textsc{LearnGMM}).} Draw a random $\theta\in\RR^K$ uniformly from the unit sphere and contract it with $\widehat{\cM}_3^{\,\rm c}$:
\[
[\mA(\theta)]_{ij}=\sum_{k=1}^K \big[\widehat{\cM}_3^{\,\rm c}\big]_{ijk}\,\theta_k.
\]
Compute the eigenpairs of $\mA(\theta)$ to recover $\v_k$ (up to sign/permutation).

\item \textbf{Parameter estimates.} For each recovered component $\hat\v_k$, compute 
\[
\hat\omega_k
\;=\;
\big(\theta^\top \hat\v_k\big)^2,
\qquad
\hat\bmu_k
\;=\;
\frac{1}{\sqrt{\hat\omega_k}}\,
(\widehat{\mW}^{\,\rm c})^{\dagger}\hat\v_k,
\]
where $(\cdot)^\dagger$ denotes the Moore-Penrose pseudoinverse.
\end{enumerate}
\end{tcolorbox}
\vspace{-1mm}

\begin{remark}
\label{rem:thetamulti}
Note that in practice one can draw several independent vectors $\theta$ in Step~1 of Algorithm 1 and then keep the best-conditioned matrix $\mA(\theta)$, or the one with the most well-separated eigenvalues, as done in the original \textsc{LearnGMM} procedure \cite{hsu2013learning}.
\end{remark}

\begin{remark}
\label{rem:consist}
 While consistency of the low-dimensional object $\cM_3(\mW,\mW,\mW)$ is ensured by using $\widehat{\mW}^{\,\rm c}$, the component means $\bmu_k$ are \emph{not} consistently estimated because the signal subspace $\text{span}\{\u_1,\ldots,\u_K\}$ cannot be perfectly recovered in the LDR, as per Theorem \ref{thm:spiked_model}.
 Still, it can be shown that our correction allows recovering their projection onto the estimated subspace $\widehat\mU_K \widehat\mU_K^\top \bmu_k$, which is the best that can be achieved when whitening is used.
\end{remark}

\subsection{Numerical illustration}\label{ssec:pipeline}

{We consider the estimation of the means in a spherical GMM with $K=2$, $N=2500$, equal weights $\omega_1 = \omega_2 = 0.5$ and random unit-norm means satisfying $\bmu_1^\top \bmu_2 = 0.5$. We compare the performance of the estimators described in Section~\ref{sec:tensordec}, with and without corrected whitening, by computing $\|\hat\bmu_k - \bmu_k \|^2$ for $k=1,2$. Fig.~\ref{fig:alg-perf-snr} displays the averaged results over 60 Monte Carlo realizations of the dataset, for $P=2500$ (left) and $P=500$ (right). The dashed vertical lines indicate the critical SNR levels defined in Section~\ref{ssec:resalign}.
As predicted, both methods perform poorly before the second phase transition $S^\star_2$, since the eigenvectors $\u_1,\u_2$ cannot be both weakly recovered in this regime. Above $S_2^\star$, both spikes are recoverable, yet the original algorithm using standard whitening still incurs in substantial errors due to a high residual alignment between whitened means. By contrast, the corrected whitening matrix restores (asymptotic) orthogonality, thereby significantly reducing estimation errors. At very high SNR, the residual alignment decays and both methods perform nearly identically. As seen in Fig.~\ref{fig:alg-perf-snr} (right), significant improvement with the corrected whitening is obtained on a smaller interval for $P=500$, since $c$ then gets closer to the classical regime $c\to 0$.}

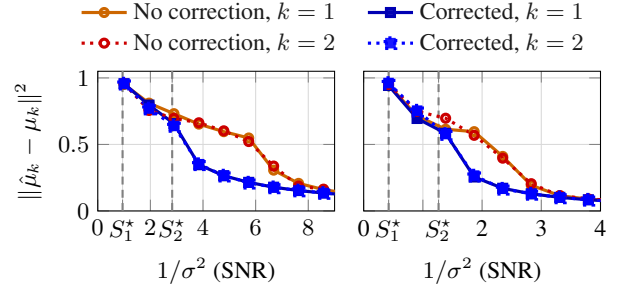
\begin{figure}[!t]
\centering
\begin{tikzpicture}
\pgfplotstableread[col sep=comma]{alg_perf_curves500025006005_HALF.csv}\datatable
\pgfplotstableread[col sep=comma]{alg_perf_crit500025006005.txt}\crittable
\pgfplotstablegetelem{0}{snr_crit_1}\of\crittable
\edef\snrcritone{\pgfplotsretval}
\pgfplotstablegetelem{0}{snr_crit_2}\of\crittable
\edef\snrcrittwo{\pgfplotsretval}

\begin{axis}[
  width=0.55\linewidth, height=0.4\linewidth,
  xmin=0, xmax=9, ymin=0, ymax=1.05,
  xlabel={$1/\sigma^2$ (SNR)},
  ylabel={$\|\hat{\mu}_k-\mu_k\|^2$},
  grid=both, grid style={draw=gray!30},
  legend cell align=left,
  legend columns=2,                 
  legend style={
    at={(1,1.05)}, anchor=south,  
    draw=none, font=\small,
    fill=white, fill opacity=0.9
  },
  tick label style={/pgf/number format/fixed},
  every axis plot/.append style={line width=1.2pt},
    extra x ticks={\snrcritone, \snrcrittwo},
  extra x tick labels={$S^\star_1$, $S^\star_2$},
]

\addplot+[gray, densely dashed, thick, forget plot, no marks]
  coordinates {(\snrcritone,0) (\snrcritone,1.05)};

\addplot+[gray, densely dashed, thick, forget plot, no marks]
  coordinates {(\snrcrittwo,0) (\snrcrittwo,1.05)};

\addplot+[color=orange!80!black, mark=o, mark size=1.5pt]
  table[x=snr, y=se_orig_1]{\datatable};
\addlegendentry{No correction, $k=1$ \;\;\;}

\addplot+[color=blue!70!black, mark=square*, mark size=1.5pt, mark options={fill=blue!70!black}]
  table[x=snr, y=se_mod_1]{\datatable};
\addlegendentry{Corrected, $k=1$}

\addplot+[color=red!80!black, dotted, mark=o, mark size=1.5pt, mark options=solid]
  table[x=snr, y=se_orig_2]{\datatable};
\addlegendentry{No correction, $k=2$ \;\;\;}

\addplot+[color=blue, dotted, mark=square*, mark size=2pt]
  table[x=snr, y=se_mod_2]{\datatable};
\addlegendentry{Corrected, $k=2$}


\end{axis}
\end{tikzpicture}
\hskip-3.4cm
\begin{tikzpicture}
\pgfplotstableread[col sep=comma]{alg_perf_curves50005006005NEW_HALF.csv}\datatable
\pgfplotstableread[col sep=comma]{alg_perf_crit50005006005NEW.txt}\crittable
\pgfplotstablegetelem{0}{snr_crit_1}\of\crittable
\edef\snrcritone{\pgfplotsretval}
\pgfplotstablegetelem{0}{snr_crit_2}\of\crittable
\edef\snrcrittwo{\pgfplotsretval}

\begin{axis}[
  width=0.55\linewidth, height=0.4\linewidth,
  xmin=0, xmax=4, ymin=0, ymax=1.05,
  xlabel={$1/\sigma^2$ (SNR)},
  grid=both, grid style={draw=gray!30},
  tick label style={/pgf/number format/fixed},
  every axis plot/.append style={line width=1.2pt},
  xtick={0,1,2,3,4},
  xticklabels={0,,2,3,4},
  extra x ticks={\snrcritone, \snrcrittwo},
  extra x tick labels={$S^\star_1$, $S^\star_2$},
  ytick={0,0.5,1},
  yticklabels={},
]

\addplot+[gray, densely dashed, thick, forget plot, no marks]
  coordinates {(\snrcritone,0) (\snrcritone,1.05)};

\addplot+[gray, densely dashed, thick, forget plot, no marks]
  coordinates {(\snrcrittwo,0) (\snrcrittwo,1.05)};

\addplot+[color=orange!80!black, mark=o, mark size=1.5pt]
  table[x=snr, y=se_orig_1]{\datatable};

\addplot+[color=blue!70!black, mark=square*, mark size=1.5pt, mark options={fill=blue!70!black}]
  table[x=snr, y=se_mod_1]{\datatable};

\addplot+[color=red!80!black, dotted, mark=o, mark size=1.5pt, mark options=solid]
  table[x=snr, y=se_orig_2]{\datatable};

\addplot+[color=blue, dotted, mark=square*, mark size=2pt]
  table[x=snr, y=se_mod_2]{\datatable};


\end{axis}
\end{tikzpicture}
\caption{GMM means estimation with and without corrected whitening.}
\label{fig:alg-perf-snr}
\end{figure}

\section{Conclusion}

By exploiting results pertaining to spiked covariance matrix models coming from random matrix theory, we precisely characterized the loss of orthogonality of the whitened GMM means that takes place in the large-dimensional regime.
This result allowed us to propose a simple scalar correction that restores their asymptotic orthogonality, leading to a significant performance improvement, especially under low signal-to-noise ratio, as illustrated by our numerical experiments.

\vfill\pagebreak

\bibliographystyle{IEEEbib}
\bibliography{strings, refs}

\end{document}